\let\csname equation*\endcsname\relax
\let\csname endequation*\endcsname\relax
\DeclareMathOperator*{\argmax}{arg\,max}
\newtheorem{theorem}{Theorem}
\newcommand*{\versionmath}{2}
\begin{document}


\title{Deterministic Quantum Annealing Expectation-Maximization Algorithm}

\author{Hideyuki Miyahara}
\ead{hideyuki\_miyahara@mist.i.u-tokyo.ac.jp}


\address{%
Graduate School of Information Science and Technology,
The University of Tokyo,
7-3-1 Hongosanchome Bunkyo-ku Tokyo 113-8656, Japan
}%


\author{Koji Tsumura}

\address{%
Graduate School of Information Science and Technology,
The University of Tokyo,
7-3-1 Hongosanchome Bunkyo-ku Tokyo 113-8656, Japan
}%

\author{Yuki Sughiyama}
\address{%
Institute of Industrial Science, The University of Tokyo,
4-6-1, Komaba, Meguro-ku, Tokyo 153-8505, Japan
}%


\date{\today}

\begin{abstract}
Maximum likelihood estimation (MLE) is one of the most important methods in machine learning, and the expectation-maximization (EM) algorithm is often used to obtain maximum likelihood estimates.
However, EM heavily depends on initial configurations and fails to find the global optimum.
On the other hand, in the field of physics, quantum annealing (QA) was proposed as a novel optimization approach.
Motivated by QA, we propose a quantum annealing extension of EM, which we call the deterministic quantum annealing expectation-maximization (DQAEM) algorithm.
We also discuss its advantage in terms of the path integral formulation.
Furthermore, by employing numerical simulations, we illustrate how it works in MLE and show that DQAEM outperforms EM.
\end{abstract}

\pacs{03.67.-a, 03.67.Ac, 89.90.+n, 89.70.-a, 89.20.-a}

\maketitle


\section{Introduction} \label{intro}

Machine learning gathers considerable attention in a wide range of fields~\cite{Bishop01, Murphy01}.
In unsupervised learning, which is a major branch of machine learning, maximum likelihood estimation (MLE) plays an important role to characterize a given data set.
One of the most common and practical approaches for MLE is the expectation-maximization (EM) algorithm.
Although EM is widely used~\cite{Dempster01}, it is also known to be trapped in local optima depending on initial configurations due to non-convexity of log-likelihood functions.

One of the breakthroughs for non-convex optimization is simulated annealing (SA), proposed by Kirkpatrick \textit{et al.}~\cite{Kirkpatrick01, Kirkpatrick02}.
In SA, a random variable, which mimics thermal fluctuations, is added during the optimization process to overcome potential barriers in non-convex optimization.
Moreover, the global convergence of SA is guaranteed when the annealing process is infinitely long \cite{Geman01}.
%
Motivated by SA and its variant~\cite{Rose01, Rose02}, Ueda and Nakano developed a deterministic simulated annealing expectation-maximization (DSAEM) algorithm~\footnote{This algorithm is originally called the deterministic annealing expectation-maximization algorithm in Ref.~\cite{Ueda01}. However, to distinguish our and their approaches, we refer to it as DSAEM in this paper.} by introducing thermal fluctuations into EM~\cite{Ueda01}.
This approach succeeded in improving the performance of EM without the increase of numerical costs.
However, problems caused by non-convexity are still remaining.

Another approach for non-convex optimization is quantum annealing (QA), which was proposed in Refs.~\cite{Apolloni01, Finnila01, Kadowaki01}, and it has been intensively studied by many physicists~\cite{Santoro01, Santoro02, Farhi01, Morita01, Brooke01, Falco01, Falco02, Das01, Marto01}.
One of the reasons why QA attracts great interest is that, in some cases, it outperforms SA~\cite{Morita01}.
Another reason is that QA can be directly implemented on a quantum computer, and much effort is devoted to realize one via many approaches~\cite{Lloyd01, Rosentrost01, Wiebe01, Johnson01}.
Moreover, quantum algorithms for machine learning are extensively studied~\cite{Schuld01, Aaronson01, Robentrost01}.


In this study, to improve the performances of EM and DSAEM, by employing QA we propose a new algorithm, which we call the deterministic quantum annealing expectation-maximization (DQAEM) algorithm.
To be more precise, by quantizing hidden variables in EM and adding a non-commutative term, we extend classical algorithms: EM and DSAEM.
Then, we discuss the mechanism of DQAEM from the viewpoint of the path integral formulation.
Through this discussion, we elucidate how DQAEM overcomes the problem of local optima.
Furthermore, as applications, we focus on clustering problems with Gaussian mixture models (GMMs).
Here, it is confirmed that DQAEM outperforms EM and DSAEM through numerical simulations.

This paper is organized as follows.
In Sec.~\ref{review-em}, we review MLE and EM to prepare for DQAEM.
In Sec.~\ref{dqaem-gmm-00}, which is the main section of this paper, we present the formulation of DQAEM.
Next, we give an interpretation of DQAEM and show its advantage over EM and DSAEM from the viewpoint of the path integral formulation in Sec.~\ref{path-dqaem-10}.
Then, in Sec.~\ref{sec-numerical-00}, we introduce GMMs and demonstrate numerical simulations.
Here, it is found that DQAEM is superior to EM and DSAEM.
Finally, Sec.~\ref{conc} conclude this paper.

\section{Maximum likelihood estimation and expectation-maximization algorithm} \label{review-em}

To make this paper self-contained, we review MLE and EM.
Suppose that we have $N$ data points $Y_\mathrm{obs} = \{y_{1}, y_{2}, \dots, y_{N}\}$, which are independent and identically distributed, and let $\{\sigma_1, \sigma_2, \dots, \sigma_N\}$ be a set of hidden variables.
In this paper, we denote the joint probability density function on $y_i$ and $\sigma_i$ with a parameter $\theta$ as $f(y_i, \sigma_i; \theta)$.

Using these definitions, the log-likelihood function of $Y_\mathrm{obs}$ is represented by
\begin{align}
\mathcal{K} (\theta) &\coloneqq \sum_{i=1}^N \ln \sum_{\sigma_i \in S^\sigma} f (y_i, \sigma_i; \theta) , \label{log-likelihood-xx-01}
\end{align}
where $S^\sigma$ represents a discrete configuration set of $\sigma_i$; that is, $S^\sigma = \{1, 2, \dots, K\}$.
In MLE, we estimate the parameter $\theta$ by maximizing the log-likelihood function, Eq.~\eqref{log-likelihood-xx-01}.
However, it is difficult to maximize Eq.~\eqref{log-likelihood-xx-01}, because $\mathcal{K}(\theta)$ is analytically unsolvable and primitive methods, such as Newton's method, are known to be less effective~\cite{Xu01}.
We therefore often use EM for practical applications~\cite{Bishop01, Murphy01}.

EM consists of two steps.
To introduce them, we decompose $\mathcal{K}(\theta)$ into two parts:
\begin{align}
\mathcal{K} (\theta) &= \mathcal{L}(\theta) + \mathrm{KL} \left( \prod_{i=1}^N q(\sigma_{i}) \middle\| \prod_{i=1}^N \frac{f(y_i, \sigma_i; \theta)}{\sum_{\sigma_i \in S^\sigma} f(y_i, \sigma_i; \theta)} \right),
\end{align}
where
\begin{align}
\mathcal{L}(\theta) &\coloneqq \sum_{i=1}^N \sum_{\sigma_{i} \in S^\sigma} q(\sigma_{i}) \ln \left( f(y_{i}, \sigma_{i}; \theta) \frac{1}{q(\sigma_{i})} \right), \label{EM-Estep-02} \\
\mathrm{KL} &\left( \prod_{i=1}^N q(\sigma_{i}) \middle\| \prod_{i=1}^N \frac{f(y_i, \sigma_i; \theta)}{\sum_{\sigma_i \in S^\sigma} f(y_i, \sigma_i; \theta)} \right) \nonumber \\
            &\coloneqq - \sum_{i=1}^N \sum_{\sigma_{i} \in S^\sigma} q(\sigma_{i}) \ln \left( \frac{f(y_i, \sigma_i; \theta)}{\sum_{\sigma_i \in S^\sigma} f(y_i, \sigma_i; \theta)} \frac{1}{q(\sigma_{i})} \right). \label{KL-xx-01}
\end{align}
Here, we use an arbitrary probability function $q(\sigma_i)$ that satisfies $\sum_{\sigma_i \in S^\sigma} q(\sigma_i) = 1$.
Note that $\mathrm{KL}(\cdot \| \cdot)$ is the Kullback-Leibler divergence~\cite{Kullback01, Kullback02}.
On the basis of this decomposition, EM is composed of the following two steps, which are called the E and M steps.
In the E step, we minimize the KL divergence, Eq~\eqref{KL-xx-01}, with respect to $q(\sigma_i)$ under a fixed $\theta'$.
Then we obtain
\begin{align}
q(\sigma_i) &= \frac{f(y_i, \sigma_i; \theta')}{\sum_{\sigma_i \in S^\sigma} f(y_i, \sigma_i; \theta')}. \label{EM-Estep01}
\end{align}
Next, by substituting Eq.~\eqref{EM-Estep01} into Eq.~\eqref{EM-Estep-02}, we obtain
\begin{align}
\mathcal{L}(\theta) = \mathcal{Q}(\theta, \theta') - \sum_{i=1}^N &\sum_{\sigma_i \in S^\sigma} \frac{f(y_i, \sigma_i; \theta')}{\sum_{\sigma_i \in S^\sigma} f(y_i, \sigma_i; \theta')} \ln \frac{f(y_i, \sigma_i; \theta')}{\sum_{\sigma_i \in S^\sigma} f(y_i, \sigma_i; \theta')},
\end{align}
where
\begin{align}
\mathcal{Q}(\theta, \theta') &\coloneqq \sum_{i=1}^N \sum_{\sigma_i \in S^\sigma} \frac{f(y_i, \sigma_i; \theta')}{\sum_{\sigma_i \in S^\sigma} f(y_i, \sigma_i; \theta')} \ln f(y_i, \sigma_i; \theta). \label{EM-Mstep01}
\end{align}
In the M step, we maximize $\mathcal{L}(\theta)$ instead of $\mathcal{K}(\theta)$ with respect to $\theta$; that is, we maximize $\mathcal{Q}(\theta, \theta')$ under the fixed $\theta'$.
In EM, we iterate these two steps.
Thus, assuming that $\theta_{t}$ be a tentative estimated parameter at the $t$-th iteration, the new estimated parameter $\theta_{t+1}$ is determined by
\begin{align}
\theta_{t+1} = \argmax_\theta \, \mathcal{Q}(\theta; \theta_{t}). \label{EM-Mstep03}
\end{align}
This procedure is summarized in Algo.~\ref{EM-algorithm-01} with pseudo-code.
\begin{algorithm}[t]
\caption{Expectation-maximization (EM) algorithm}
\label{EM-algorithm-01}
\begin{algorithmic}[1]
\STATE set $t \leftarrow 0$ and initialize $\theta_{0}$
\WHILE{convergence criterion is satisfied}
\STATE (E step) calculate $q(\sigma_{i})$ in Eq.~\eqref{EM-Estep01} with $\theta = \theta_t$ for $i = 1, 2, \dots, N$
\STATE (M step) calculate $\theta_{t+1}$ with Eq.~\eqref{EM-Mstep03}
\ENDWHILE
\end{algorithmic}
\end{algorithm}
Despite the success of EM, it is known that EM sometimes trapped in local optima and fails to estimate the optimal parameter~\cite{Bishop01, Murphy01}.
To relax these problems, we improve EM by employing methods in quantum mechanics.

\section{Deterministic quantum annealing expectation-maximization algorithm} \label{dqaem-gmm-00}

This section is the main part of this paper.
We formulate DQAEM by quantizing the hidden variables $\{\sigma_i\}_{i = 1}^N$ in EM and employing the annealing technique.
In the previous section, we denoted by $\sigma_i$ and $S^\sigma$ the hidden variable for each data point $i$ and the set of its possible values, respectively.
Corresponding to the classical setup, we introduce a quantum one as follows.
We define an operator $\hat{\sigma}_i$ and a ket vector $\Ket{\sigma_i = k}$ $(k \in S^\sigma)$ so that they satisfy
\begin{align}
\hat{\sigma}_i \ket{\sigma_i = k} &= k \ket{\sigma_i = k}.
\end{align}
Here we note that the eigenvalues of $\hat{\sigma}_i$ correspond to the possible values of the hidden variable $\sigma_i$ in the classical setup.
In addition, we introduce a bra vector corresponding to $\Ket{\sigma_i = k}$ as $\Bra{\sigma_i = l}$ $(l \in S^\sigma)$ so that it satisfies
\begin{align}
\Braket{\sigma_i = k | \sigma_i = l}
&=
\begin{cases}
1 \ (k = l) \\
0 \ (k \ne l)
\end{cases}.
\end{align}
Moreover, we denote the trace on $\hat{\sigma}_i$ by $\mathrm{Tr}_{\sigma_i} [\cdot]$; using the ket vector $\Ket{\sigma_i = k}$ and the bra vector $\Bra{ \sigma_i = k}$, it is represented by $\mathrm{Tr}_{\sigma_i} [\cdot] = \sum_{k = 1}^K \Braket{ \sigma_i = k | \cdot | \sigma_i = k}$.
To simplify the notation, we sometimes use $\Ket{\sigma_i}$ and $\Bra{ \sigma_i}$ for $\Ket{\sigma_i = k}$ and $\Bra{ \sigma_i = k}$, respectively, and the trace on $\sigma_i$ is also represented by $\mathrm{Tr}_{\sigma_i} [\cdot] = \sum_{\sigma_i \in S^\sigma} \Braket{ \sigma_i | \cdot | \sigma_i}$.

Next, we define the negative free energy function as
\begin{align}
\mathcal{G}_{\beta, \Gamma} (\theta) &\coloneqq \frac{1}{\beta} \ln \mathcal{Z}_{\beta, \Gamma} (\theta), \label{free-energy-66}
\end{align}
where $\mathcal{Z}_{\beta, \Gamma} (\theta)$ denotes the partition function that is given by
\begin{align}
\mathcal{Z}_{\beta, \Gamma} (\theta) &\coloneqq \prod_{i=1}^N \mathcal{Z}_{\beta, \Gamma}^{(i)} (\theta), \\
\mathcal{Z}_{\beta, \Gamma}^{(i)} (\theta) &\coloneqq \mathrm{Tr}_{\sigma_i} \left[ f_{\beta, \Gamma} (y_{i}, \hat{\sigma}_{i}; \theta) \right].
\end{align}
Here, $f_{\beta, \Gamma} (y_{i}, \hat{\sigma}_{i}; \theta)$ is the exponential weight with a non-commutative term $H^\mathrm{nc}$:
\begin{align}
		f_{\beta, \Gamma} (y_{i}, \hat{\sigma}_{i}; \theta) &\coloneqq \exp \left( - \beta \{ H + H^\mathrm{nc} \} \right), \\
H(y_{i}, \hat{\sigma}_{i}; \theta) &\coloneqq - \ln f(y_{i}, \hat{\sigma}_{i}; \theta), \label{hamil-def-01} \\
H^\mathrm{nc} &\coloneqq \Gamma \hat{\sigma}^\mathrm{nc},
\end{align}
where the non-commutative relation $[\hat{\sigma}_i, \hat{\sigma}^\mathrm{nc}] \ne 0$ is imposed, and $\beta$ and $\Gamma$ represent parameters for simulated and quantum annealing, respectivlely.
If we set $\Gamma = 0$ and $\beta = 1$, the negative free energy function, Eq.~\eqref{free-energy-66}, reduces to the log-likelihood function, Eq.~\eqref{log-likelihood-xx-01}:
\begin{align}
\mathcal{G}_{\beta = 1, \Gamma = 0} (\theta) = \mathcal{K} (\theta).
\end{align}
Therefore, in annealing processes, $\Gamma$ and $\beta$ are changed from appropriate initial values to $0$ and $1$, respectively.
Corresponding to EM in the previous section, we construct the E and M steps in DQAEM.
Using an arbitrary density matrix $\hat{\rho}_i$ that satisfies $\mathrm{Tr}_{\sigma_i} [\hat{\rho}_i] = 1$, we divide the negative free energy function, Eq.~\eqref{free-energy-66}, into two parts as
\begin{align}
\beta \mathcal{G}_{\beta,\Gamma}(\theta) &= \mathcal{F}_{\beta,\Gamma}(\theta) + \mathrm{KL}\left( \hat{\rho}_i \middle\| \frac{f_{\beta, \Gamma} (y_{i}, \hat{\sigma}_{i}; \theta)}{\mathrm{Tr}_{\sigma_i} \left[ f_{\beta, \Gamma} (y_{i}, \hat{\sigma}_{i}; \theta) \right] } \right),
\end{align}
where
\begin{align}
\mathcal{F}_{\beta, \Gamma} (\theta) &\coloneqq \sum_{i=1}^N \mathrm{Tr}_{\sigma_{i}} \left[ \hat{\rho}_{i} \left\{ \ln f_{\beta, \Gamma} (y_{i}, \hat{\sigma}_{i}; \theta) - \ln \hat{\rho}_i \right\} \right], \label{QAEM-Mstep31}
\end{align}
\begin{align}
\mathrm{KL} &\left( \hat{\rho}_i \middle\| \frac{f_{\beta, \Gamma} (y_{i}, \hat{\sigma}_{i}; \theta)}{\mathrm{Tr}_{\sigma_i} \left[ f_{\beta, \Gamma} (y_{i}, \hat{\sigma}_{i}; \theta) \right] } \right) \nonumber \\
            &\coloneqq - \sum_{i=1}^N \mathrm{Tr}_{\sigma_{i}} \bigg[ \hat{\rho}_{i} \bigg\{ \ln \frac{f_{\beta, \Gamma} (y_{i}, \hat{\sigma}_{i}; \theta)}{\mathrm{Tr}_{\sigma_i} \left[ f_{\beta, \Gamma} (y_{i}, \hat{\sigma}_{i}; \theta) \right] } - \ln \hat{\rho}_i \bigg\} \bigg]. \label{quantum-KL-01}
\end{align}
Here, $\mathrm{KL} (\hat{\rho}_1 \| \hat{\rho}_2) = \mathrm{Tr}[\hat{\rho}_1 (\ln \hat{\rho}_1 - \ln \hat{\rho}_2)]$ is a quantum extension of the Kullback-Leibler divergence between density matrices $\hat{\rho}_1$ and $\hat{\rho}_2$~\cite{Umegaki01}.

In the E step of DQAEM, Eq.~\eqref{quantum-KL-01} under a fixed $\theta$ is minimized; then we obtain
\begin{align}
\hat{\rho}_i &= \frac{f_{\beta, \Gamma} (y_{i}, \hat{\sigma}_{i}; \theta)}{\mathrm{Tr}_{\sigma_i} \left[ f_{\beta, \Gamma} (y_{i}, \hat{\sigma}_{i}; \theta) \right] }, \label{posterior02}
\end{align}
which is a quantum extension of Eq.~\eqref{EM-Estep01}.
Note that, to calculate $f_{\beta, \Gamma} (y_{i}, \hat{\sigma}_{i}; \theta)$, we need to use the Suzuki-Trotter expansion~\cite{Suzuki01}, the Taylor expansion, the Pad\'e approximation or the technique proposed in Ref.~\cite{Al-Mohy01}, because $f_{\beta, \Gamma} (y_{i}, \hat{\sigma}_{i}; \theta')$ has the non-commutative term $H^\mathrm{nc}$.

On the other hand, in the M step of DQAEM, the parameter $\theta$ is determined by maximizing Eq.~\eqref{QAEM-Mstep31}.
Following the method in the previous section, we substitute Eq.~\eqref{posterior02} into Eq.~\eqref{QAEM-Mstep31}; then we have
\begin{align}
\ifthenelse{\versionmath = 1}
{
\mathcal{F}_{\beta,\Gamma}(\theta) &= \beta \mathcal{U}_{\beta,\Gamma}(\theta; \theta') \nonumber \\
& \quad - \sum_{i=1}^N \mathrm{Tr}_{\sigma_i} \Bigg[ \frac{f_{\beta, \Gamma}(y_i, \sigma_i; \theta')}{\mathrm{Tr}_{\sigma_i} f_{\beta, \Gamma}(y_i, \sigma_i; \theta')} \nonumber \\
& \qquad \qquad \times \ln \frac{f_{\beta, \Gamma}(y_i, \sigma_i; \theta')}{\mathrm{Tr}_{\sigma_i} f_{\beta, \Gamma}(y_i, \sigma_i; \theta')} \Bigg],
}
{
\mathcal{F}_{\beta,\Gamma}(\theta) &= \mathcal{U}_{\beta,\Gamma}(\theta; \theta') - \sum_{i=1}^N \mathrm{Tr}_{\sigma_i} \Bigg[ \frac{f_{\beta, \Gamma}(y_i, \sigma_i; \theta')}{\mathrm{Tr}_{\sigma_i} f_{\beta, \Gamma}(y_i, \sigma_i; \theta')} \ln \frac{f_{\beta, \Gamma}(y_i, \sigma_i; \theta')}{\mathrm{Tr}_{\sigma_i} f_{\beta, \Gamma}(y_i, \sigma_i; \theta')} \Bigg],
}
\end{align}
where
\begin{align}
\ifthenelse{\versionmath = 1}
{
&\mathcal{U}_{\beta,\Gamma} (\theta; \theta') \coloneqq \sum_{i=1}^N \mathrm{Tr}_{\sigma_{i}} \left[ \frac{f_{\beta, \Gamma} (y_{i}, \hat{\sigma}_{i}; \theta')}{\mathrm{Tr}_{\sigma_i} \left[ f_{\beta, \Gamma} (y_{i}, \hat{\sigma}_{i}; \theta') \right] } \ln f_{\beta = 1, \Gamma} (y_{i}, \hat{\sigma}_{i}; \theta) \right]. \label{QAEM-Mstep01}
}
{
&\mathcal{U}_{\beta,\Gamma} (\theta; \theta') \coloneqq \sum_{i=1}^N \mathrm{Tr}_{\sigma_{i}} \left[ \frac{f_{\beta, \Gamma} (y_{i}, \hat{\sigma}_{i}; \theta')}{\mathrm{Tr}_{\sigma_i} \left[ f_{\beta, \Gamma} (y_{i}, \hat{\sigma}_{i}; \theta') \right] } \ln f_{\beta, \Gamma} (y_{i}, \hat{\sigma}_{i}; \theta) \right]. \label{QAEM-Mstep01}
}
\end{align}
Note that $\mathcal{U}_{\beta,\Gamma} (\theta; \theta')$ represents the quantum extension of $\mathcal{Q}(\theta, \theta')$, Eq.~\eqref{EM-Mstep01}.
Thus, the computation of the M step of DQAEM is written as
\begin{align}
\theta_{t+1} = \argmax_\theta \, \mathcal{U}_{\beta, \Gamma} (\theta, \theta_{t}). \label{QAEM-Mstep03}
\end{align}
During iterations, the annealing parameter $\Gamma$, which controls the strength of quantum fluctuations, is changed from the initial value $\Gamma_0 \ (\Gamma_0 \ge 0)$ to $0$, and the other parameter $\beta$, which controls the strength of thermal fluctuations, is changed from the initial value $\beta_0 \ (0< \beta_0 \le 1)$ to $1$.
We summarize DQAEM in Algo.~\ref{DQAEM-algorithm-01}.
\begin{algorithm}[t]
\caption{Deterministic quantum annealing expectation-maximization (DQAEM) algorithm}
\label{DQAEM-algorithm-01}
\begin{algorithmic}[1]
\STATE set $t \leftarrow 0$ and initialize $\theta_{0}$
\STATE set $\beta \leftarrow \beta_0 \ (0 < \beta_0 \le 1)$ and $\Gamma \leftarrow \Gamma_0 \ (\Gamma_0 \ge 0)$
\WHILE{convergence criteria is satisfied}
\STATE (E step) calculate $\hat{\rho}_{i}$ in Eq.~\eqref{posterior02} with $\theta = \theta_t$ for $i = 1, 2, \dots, N$
\STATE (M step) calculate $\theta_{t+1}$ with Eq.~\eqref{QAEM-Mstep03}
\STATE increase $\beta$ and decrease $\Gamma$
\ENDWHILE
\end{algorithmic}
\end{algorithm}

Finally, we mention that the free energy function, Eq.~\eqref{free-energy-66}, increases monotonically when the parameter $\theta_t$ is updated by DQAEM.
The proof is given in~\ref{app-dqaem-gmm-02}.

\section{Mechanism of DQAEM} \label{path-dqaem-10}

In this section, we explain an advantage of DQAEM over EM and DSAEM using the path integral formulation.
First, we demonstrate that DQAEM is a quantum extension of EM and DSAEM.
Setting $\Gamma = 0$ in Eq.~\eqref{QAEM-Mstep01}, which corresponds to the classical limit, we have
\begin{align}
\ifthenelse{\versionmath = 1}
{
\mathcal{U}_{\beta,\Gamma = 0} (\theta; \theta') &= \sum_{i=1}^N \frac{1}{\sum_{\sigma_i \in S^\sigma} \Braket{\sigma_i | f_{\beta, \Gamma = 0} (y_{i}, \hat{\sigma}_{i}; \theta') | \sigma_i}} \nonumber \\
& \qquad \times \sum_{\sigma_{i} \in S^\sigma} \Braket{ \sigma_i | f_{\beta, \Gamma = 0} (y_{i}, \hat{\sigma}_{i}; \theta') \ln f_{\beta = 1, \Gamma = 0} (y_{i}, \hat{\sigma}_{i}; \theta)  | \sigma_i },
}
{
\mathcal{U}_{\beta,\Gamma = 0} (\theta; \theta') &= \sum_{i=1}^N \frac{1}{\sum_{\sigma_i \in S^\sigma} \Braket{ \sigma_i | f_{\beta, \Gamma = 0} (y_{i}, \hat{\sigma}_{i}; \theta') | \sigma_i }} \nonumber \\
& \qquad \times \sum_{\sigma_{i} \in S^\sigma} \Braket{ \sigma_i | f_{\beta, \Gamma = 0} (y_{i}, \hat{\sigma}_{i}; \theta') \ln f_{\beta, \Gamma = 0} (y_{i}, \hat{\sigma}_{i}; \theta)  | \sigma_i },
}
\end{align}
where we use $\mathrm{Tr}_{\sigma_i} [\cdot] = \sum_{\sigma_i \in S^\sigma} \Braket{ \sigma_i | \cdot | \sigma_i }$.
Taking into account that $f_{\beta, \Gamma = 0} (y_{i}, \hat{\sigma}_{i}; \theta')$ does not have the non-commutative term $H^\mathrm{nc}$, we get
\begin{align}
\ifthenelse{\versionmath = 1}
{
\mathcal{U}_{\beta,\Gamma = 0} (\theta; \theta') &= \sum_{i=1}^N \frac{\sum_{\sigma_{i} \in S^\sigma} f_{\beta, \Gamma = 0} (y_{i}, \sigma_{i}; \theta')  \ln f_{\beta = 1, \Gamma = 0} (y_{i}, \sigma_{i}; \theta)}{\sum_{\sigma_i \in S^\sigma} f_{\beta, \Gamma = 0} (y_{i}, \sigma_{i}; \theta') }. \label{new-01}
}
{
\mathcal{U}_{\beta,\Gamma = 0} (\theta; \theta') &= \sum_{i=1}^N \frac{\sum_{\sigma_{i} \in S^\sigma} f_{\beta, \Gamma = 0} (y_{i}, \sigma_{i}; \theta')  \ln f_{\beta, \Gamma = 0} (y_{i}, \sigma_{i}; \theta)}{\sum_{\sigma_i \in S^\sigma} f_{\beta, \Gamma = 0} (y_{i}, \sigma_{i}; \theta') }. \label{new-01}
}
\end{align}
Equation~\eqref{QAEM-Mstep03} with Eq.~\eqref{new-01} provides the update equation for $\theta_{t+1}$ in DSAEM, and we obtain $\theta_{t+1}$ by solving
\begin{align}
\sum_{i=1}^N \sum_{\sigma_i \in S^\sigma} \bigg\{ \frac{f_{\beta, \Gamma = 0} (y_i, {\sigma}_i; \theta_t)}{\sum_{\sigma_i \in S^\sigma} f_{\beta, \Gamma = 0} (y_i, \sigma_i; \theta_t) } \frac{d}{d \theta} H (y_i, {\sigma}_i; \theta) \bigg\} &= 0. \label{new-02}
\end{align}
This update rule is equivalent to that of DSAEM~\cite{Ueda02}.
Furthermore, if we set $\beta = 1$, Eq.~\eqref{new-01} equals to Eq.~\eqref{EM-Mstep01} and DSAEM also reduces to EM.

On the other hand, in DQAEM, the parameter $\theta$ is updated based on Eq.~\eqref{QAEM-Mstep03} with Eq.~\eqref{QAEM-Mstep01}.
Thus we obtain $\theta_{t+1}$ by solving
\begin{align}
\sum_{i=1}^N \mathrm{Tr}_{\sigma_i} \bigg[ \frac{f_{\beta, \Gamma} (y_i, \hat{\sigma}_i; \theta_t)}{\mathrm{Tr}_{\sigma_i} \left[ f_{\beta, \Gamma} (y_i, \hat{\sigma}_i; \theta_t) \right]} \frac{d}{d \theta} H (y_i, \hat{\sigma}_i; \theta) \bigg] &= 0. \label{update-DQAEM-xx-01}
\end{align}
Using the bra-ket notation, Eq.~\eqref{update-DQAEM-xx-01} can be arranged as
\begin{align}
\sum_{i=1}^N \sum_{\sigma_i \in S^\sigma} & \bigg\{ \frac{ \Braket{ \sigma_i | f_{\beta, \Gamma} (y_i, \hat{\sigma}_i; \theta')| \sigma_i}}{\sum_{\sigma_i \in S^\sigma} \Braket{ \sigma_i | f_{\beta, \Gamma} (y_i, \hat{\sigma}_i; \theta') | \sigma_i }} \frac{d}{d \theta} H (y_i, \sigma_i; \theta) \bigg\} = 0. \label{update-DQAEM-xx-02}
\end{align}

The difference between Eqs.~\eqref{new-02} and \eqref{update-DQAEM-xx-02} is the existence of the non-commutative term $H^\mathrm{nc}$ in $f_{\beta, \Gamma} (y_i, \hat{\sigma}_i; \theta')$, which gives rise to the advantage of DQAEM.
To evaluate the effect of $H^\mathrm{nc}$,
we calculate $\Braket{ \sigma_i | f_{\beta, \Gamma} (y_i, \hat{\sigma}_i; \theta') | \sigma_i }$ by applying the Suzuki-Trotter expansion~\cite{Suzuki01}.
Then we obtain
\begin{align}
&\Braket{ \sigma_i | f_{\beta, \Gamma} (y_i, \hat{\sigma}_i; \theta') | \sigma_i } \nonumber \\
& \quad = \lim_{M \rightarrow \infty} \sum_{\sigma_{i, 1}', \sigma_{i, 1}, \dots, \sigma_{i, M-1}, \sigma_{i, M}' \in S^\sigma} \prod_{j=1}^M \Braket{ \sigma_{i, j} | e^{ - \frac{\beta}{M} H(y_i, \hat{\sigma}_i; \theta')} | \sigma_{i, j}'} \Braket{ \sigma_{i, j}' | e^{ - \frac{\beta}{M} H^\mathrm{nc}} | \sigma_{i, j-1}}, \\
& \quad = \lim_{M \rightarrow \infty} \sum_{\sigma_{i, 1}, \sigma_{i, 2}, \dots, \sigma_{i, M-1} \in S^\sigma} \prod_{j=1}^M \Braket{ \sigma_{i, j} | e^{ - \frac{\beta}{M} H(y_i, \hat{\sigma}_i; \theta')} | \sigma_{i, j}} \Braket{ \sigma_{i, j} | e^{ - \frac{\beta}{M} H^\mathrm{nc}} | \sigma_{i, j-1}}, \label{path-integral-06-01}
\end{align}
with the boundary conditions $\Ket{ \sigma_{i, 0} } = \Ket{ \sigma_{i, M} } = \Ket{ \sigma_i }$ and $\Bra{ \sigma_{i, 0} } = \Bra{ \sigma_{i, M} } = \Bra{ \sigma_i }$.
In Eq.~\eqref{path-integral-06-01}, the quantum effect comes from $\Braket{ \sigma_{i, j} | e^{ - \frac{\beta}{M} H^\mathrm{nc}} | \sigma_{i, j-1} }$.
If we assume the classical case $H^\mathrm{nc} = 0$ (i.e. $\Gamma = 0$), we obtain $\Braket{ \sigma_{i, j} | e^{ - \frac{\beta}{M} H^\mathrm{nc}} | \sigma_{i, j-1} } = \Braket{ \sigma_{i, j} | \sigma_{i, j-1} } = \delta_{\sigma_{i,j}, \sigma_{i,j-1}}$, where $\delta_{\cdot, \cdot}$ is the Kronecker delta function.
Thus, $\Ket{ \sigma_{i, j} }$ does not depend on the index along the Trotter dimension, $j$.
In terms of the path integral formulation, this fact implies that $\Braket{ \sigma_i | f_{\beta, \Gamma} (y_i, \hat{\sigma}_i; \theta') | \sigma_i}$ can be evaluated by a single classical path with the boundary condition fixed at $\sigma_i$; see Fig.~\ref{numerical-15-01}.
On the other hand, in the quantum case, $\Braket{ \sigma_{i, j} | e^{ - \frac{\beta}{M} H^\mathrm{nc}} | \sigma_{i, j-1}} \ne \Braket{ \sigma_{i, j} | \sigma_{i, j-1} }$, and therefore Eq.~\eqref{path-integral-06-01} involves not only the classical path but also quantum paths, which depend on the form of $H^\mathrm{nc}$; see Fig.~\ref{numerical-15-01}.
%
Thus, owing to these quantum paths, DQAEM may overcome the problem of local optima and it is expected that DQAEM outperforms EM and DSAEM. 
In Sec.~\ref{sec-numerical-00}, we show that the quantum effect really helps EM to find the global optimum through numerical simulations.

Before closing this section, we represent Eq.~\eqref{path-integral-06-01} with the path integral form.
In the limit $M \rightarrow \infty$, we define
\begin{align}
\Ket{ \sigma_{i}(\tau) } &\coloneqq \Ket{ \sigma_{i, j} },
\end{align}
where $\tau = \beta j / M$.
This notation leads to another expression of Eq.~\eqref{path-integral-06-01} as
\begin{align}
\Braket{ \sigma_i | f_{\beta, \Gamma} (y_i, \hat{\sigma}_i; \theta') | \sigma_i } &= \quad \int_{\sigma_i(0) = \sigma_i(\beta) = \sigma_i} \mathcal{D} \sigma_i(\tau) \, e^{ - S[\sigma_i(\tau)] },
\end{align}
where $\int \mathcal{D} \sigma_i(\tau)$ represents the integration over all paths and $S[\sigma_i(\tau)]$ is the action given by
\begin{align}
S[\sigma_i(\tau)] = - \int_{0}^{\beta} d\tau \, \Braket{ \sigma_{i}(\tau) | \left[ \frac{d}{d \tau} - \left\{ H(y_i, \hat{\sigma}_i; \theta') + H^\mathrm{nc} \right\} \right] | \sigma_{i}(\tau) }.
\end{align}
Note that this action also appears in the discussion of the Berry phase~\cite{Berry01, Nagaosa01}.

\begin{figure}[t]
\centering
\includegraphics[scale=1.00]{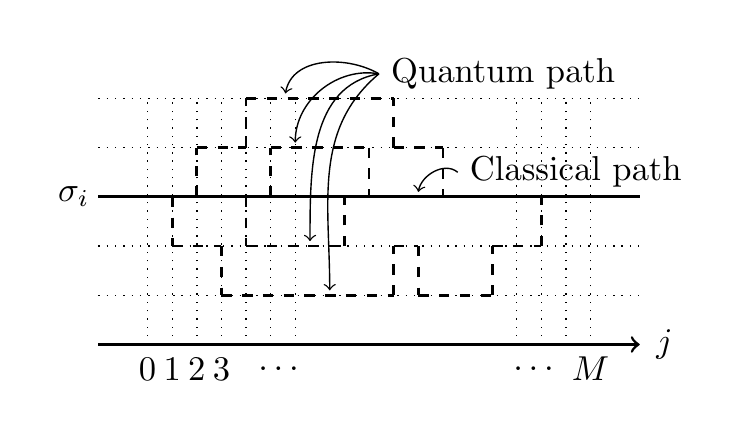}
\caption{
Schematic of classical paths (solid line) and quantum paths (dashed line) with the boundary condition $\sigma_i = \sigma_{i, 0} = \sigma_{i, M}$.
The axis represents the Trotter dimension $j \ (0 \le j \le M)$.
Quantum paths have jumps from state to state while classical paths do not.
In EM and DSAEM, only classical path (solid line) contribute to Eq.~\eqref{path-integral-06-01}.
In contrast, in DQAEM, we need to sum over all quantum paths (dashed line).
}
\label{numerical-15-01}
\end{figure}

\section{Numerical simulations} \label{sec-numerical-00}

In this section, we present numerical simulations to confirm the performance of DQAEM.
Here, we deal with GMMs with hidden variables.
We introduce the quantum representation of GMMs in Sec.~\ref{gmm-01}, and give update equations in Sec.~\ref{sec-numerical-01}.
In Sec.~\ref{sec-numerical-02}, we show numerical results to elucidate the advantage of DQAEM over EM and DSAEM.

\subsection{Gaussian mixture models} \label{gmm-01}

First, we provide a concrete setup of GMMs.
Consider that a GMM is composed of $K$ Gaussian functions and the domain of the hidden variables $S^\sigma$ is given by $\{1, 2, \dots, K\}$~\footnote{When we work on the one-hot notation~\cite{Bishop01, Murphy01}, we can formulate an equivalent quantization scheme.~\cite{Miyahara04}}.
Then the distribution of the GMM is written by
\begin{align}
f(y_i, \sigma_i; \theta) &= \sum_{k = 1}^K \pi^{k} g(y_i; \mu^{k}, \Sigma^k) \delta_{\sigma_i, k}, \label{joint-xx-01}
\end{align}
where	$g^k(y_i; \mu^{k}, \Sigma^k)$ is the $k$-th Gaussian function of the GMM parametrized by $\{\mu^k, \Sigma^k\}$, and $\delta_{\cdot, \cdot}$ is the Kronecker delta.
The parameters $\{\pi_k\}_{k=1}^K$ stand for the mixing ratios that satisfy $\sum_{k=1}^K \pi^k = 1$.
Here, we denote all the parameters collectively by $\theta$: $\theta = \{\pi^k, \mu^k, \Sigma^k\}_{k=1}^K$.
From Eq.~\eqref{hamil-def-01}, we get the Hamiltonian of this system:
\begin{align}
H(y_i, \sigma_i; \theta) &= \sum_{k=1}^K h_i^{k} \delta_{\sigma_i, k}, \label{hamil-classical-xx-01}
\end{align}
where we take the logarithm of Eq.~\eqref{joint-xx-01} and $h_i^k = - \ln ( \pi^{k} g(y_i; \mu^k, \Sigma^k))$.

Next, following the discussion of Sec.~\ref{dqaem-gmm-00}, we define the ket vector $\Ket{ \sigma_i }$, and the operator $\hat{\sigma}_i$ as
\begin{align}
\hat{\sigma}_i \Ket{ \sigma_i = k } &= k \Ket{ \sigma_i = k },
\end{align}
where $k$ represents the label of components of the GMM; that is, $k \in S^\sigma$.
To quantize Eq.~\eqref{hamil-classical-xx-01}, we replace $\sigma_i$ by $\hat{\sigma}_i$.
%
Then, we obtain the quantum Hamiltonian of the GMM as
\begin{align}
H(y_i, \hat{\sigma}_i; \theta) &= \sum_{k=1}^K h_i^{k} \delta_{\hat{\sigma}_i, k} \label{hamil-quantum-48} \\
 															&= \sum_{k=1}^K h_i^{k} \Ket{\sigma_i = k } \Bra{ \sigma_i = k }, \label{hamil-quantum-55} 
\end{align}
where we use $\delta_{\hat{\sigma}_i, k} = \Ket{ \sigma_i = k } \Bra{ \sigma_i = k }$.
If we adopt the representation,
\begin{align}
\Ket{ \sigma_i = k } &= [\underbrace{0, \dots, 0}_{k - 1}, 1, \underbrace{0, \dots, 0}_{K - k}]^\intercal,
\end{align}
Eq.~\eqref{hamil-quantum-55} can be expressed as
\begin{align}
H(y_i, \hat{\sigma}_i; \theta) &= \textrm{diag}(h_i^1, h_i^2, \dots, h_i^{K}).
\end{align}

Finally we add a non-commutative term $H^\mathrm{nc} = \Gamma \hat{\sigma}^\mathrm{nc}$ with $[\hat{\sigma}_i, \hat{\sigma}^\mathrm{nc}] \ne 0$ to Eq.~\eqref{hamil-quantum-48}.
%
Obviously, there are many candidates for $\hat{\sigma}^\mathrm{nc}$; however, in the numerical simulation, we adopt
\begin{align}
\hat{\sigma}^\mathrm{nc} &= \sum_{\substack{k = 1, \dots, K \\ l = k \pm 1}} \Ket{ \sigma_i = l } \Bra{ \sigma_i = k },
\end{align}
where $\Ket{ \sigma_i = 0 } = \Ket{ \sigma_i = K }$ and $\Ket{ \sigma_i = K + 1 } = \Ket{ \sigma_i = 1 }$.
This term induces quantum paths shown in Fig.~\ref{numerical-15-01}.

\subsection{Update equations} \label{sec-numerical-01}

Suppose that we have $N$ data points $Y_\mathrm{obs} = \{y_1, y_2, \dots, y_N\}$, and they are independent and identically distributed obeying a GMM.
In EM, the update equations of GMMs are obtained by Eq.~\eqref{EM-Mstep03} with the constraint $\sum_{k=1}^K \pi^k = 1$; then the parameters at the $(t+1)$-th iteration, $\theta_{t+1}$, are given by
\begin{align}
\pi_{t+1}^k &= \frac{N_k}{N}, \label{update-EM-01} \\
\mu_{t+1}^k &= \frac{1}{N_k} \sum_{i=1}^N y_{i} \frac{f(y_{i}, \sigma_{i} = k; \theta_{t})}{\sum_{\sigma_i \in S^\sigma} f(y_{i}, \sigma_{i}; \theta_{t})}, \label{update-EM-02} \\
\Sigma_{t+1}^k &= \frac{1}{N_k} \sum_{i=1}^N (y_{i} - \mu_{t+1}^k) (y_{i} - \mu_{t+1}^k)^\intercal \frac{f(y_{i}, \sigma_{i} = k; \theta_{t})}{\sum_{\sigma_i \in S^\sigma} f(y_{i}, \sigma_{i}; \theta_{t})}, \label{update-EM-03}
\end{align}
where $N_k = \sum_{i=1}^N \frac{f(y_{i}, \sigma_{i} = k; \theta_{t})}{\sum_{\sigma_i \in S^\sigma} f(y_{i}, \sigma_{i}; \theta_{t})}$, $\theta_{t}$ is the tentative estimated parameter at the $t$-th iteration.

In DQAEM, from Eq.~\eqref{QAEM-Mstep03}, the update equations are expressed by
\begin{align}
\pi_{t+1}^k &= \frac{N_k^\mathrm{QA}}{N} , \label{update-QA-01} \\
\mu_{t+1}^k &= \frac{1}{N_k^\mathrm{QA}} \sum_{i=1}^N y_{i} \frac{f_{\beta, \Gamma}^\mathrm{QA} (y_{i}, \sigma_{i} = k; \theta_{t})}{\mathrm{Tr}_{\sigma_i} \left[ f_{\beta, \Gamma} (y_{i}, \hat{\sigma}_{i}; \theta_{t}) \right]}, \label{update-QA-02} \\
\Sigma_{t+1}^k &= \frac{1}{N_k^\mathrm{QA}} \sum_{i=1}^N (y_{i} - \mu_{t+1}^k) (y_{i} - \mu_{t+1}^k)^\intercal \frac{f_{\beta, \Gamma}^\mathrm{QA} (y_{i}, \sigma_{i} = k; \theta_{t})}{\mathrm{Tr}_{\sigma_i} \left[ f_{\beta, \Gamma} (y_{i}, \hat{\sigma}_{i}; \theta_{t}) \right]}, \label{update-QA-03}
\end{align}
where $N_k^\mathrm{QA} = \sum_{i=1}^N \frac{ f_{\beta, \Gamma}^\mathrm{QA} (y_{i}, \sigma_{i} = k; \theta_{t}) }{\mathrm{Tr}_{\sigma_i} \left[ f_{\beta, \Gamma} (y_{i}, \hat{\sigma}_{i}; \theta_{t}) \right] }$ and we denote $\Braket{ \sigma_i = k | f_{\beta, \Gamma}(y_{i}, \hat{\sigma}_{i}; \theta_{t}) | \sigma_i = k }$ by $f_{\beta, \Gamma}^\mathrm{QA} (y_{i}, \sigma_{i} = k; \theta_{t})$ for simplicity.
If we set $\Gamma = 0$ and $\beta = 1$, the update equations of DQAEM, Eqs.~\eqref{update-QA-01}, \eqref{update-QA-02}, and \eqref{update-QA-03} reduce to those  of EM, Eqs.~\eqref{update-EM-01}, \eqref{update-EM-02}, and \eqref{update-EM-03}.
In annealing schedules, the parameters $\beta$ and $\Gamma$ are changed from initial values to $1$ and $0$ via iterations, respectively.

\subsection{Numerical results} \label{sec-numerical-02}

To clarify the advantage of DQAEM, we compare DQAEM with EM and DSAEM by using numerical simulations. 
In the annealing schedule of DSAEM, we exponentially change $\beta$ from $0.7$ to $1$, following $\beta_t = (\beta_0 - 1) \exp ( - t / 0.95 ) + 1$ where $\beta_t$ denotes $\beta$ at the $t$-iteration.
On the other hand, in DQAEM, we exponentially change $\Gamma$ from $1.2$ to $0$, following $\Gamma_t = \Gamma_0 \exp (- t / 0.95 )$ where $\Gamma_t$ is $\Gamma$ at the $t$-th iteration.
To compare the ``pure" quantum effect in DQAEM with the thermal effect in DSAEM, we fix $\beta_t = 1.0$ in the annealing schedule of DQAEM.
In the numerical simulation, we apply three algorithms to the two-dimensional data set shown in Fig.~\ref{numerical-05-01}(a).

As a result, the log-likelihood functions of EM, the negative free energy functions of DSAEM and those of DQAEM are plotted by red lines, orange lines and blue lines in Fig.~\ref{numerical-05-01}(b), respectively.
In this numerical simulation, the optimal value of the log likelihood function is $-3616.6$, which is depicted by the green line.
From this figure, we see that some trials of three algorithms succeed to find the global optimum, but others fail.
\begin{figure}[t]
\centering
\includegraphics[scale=0.55]{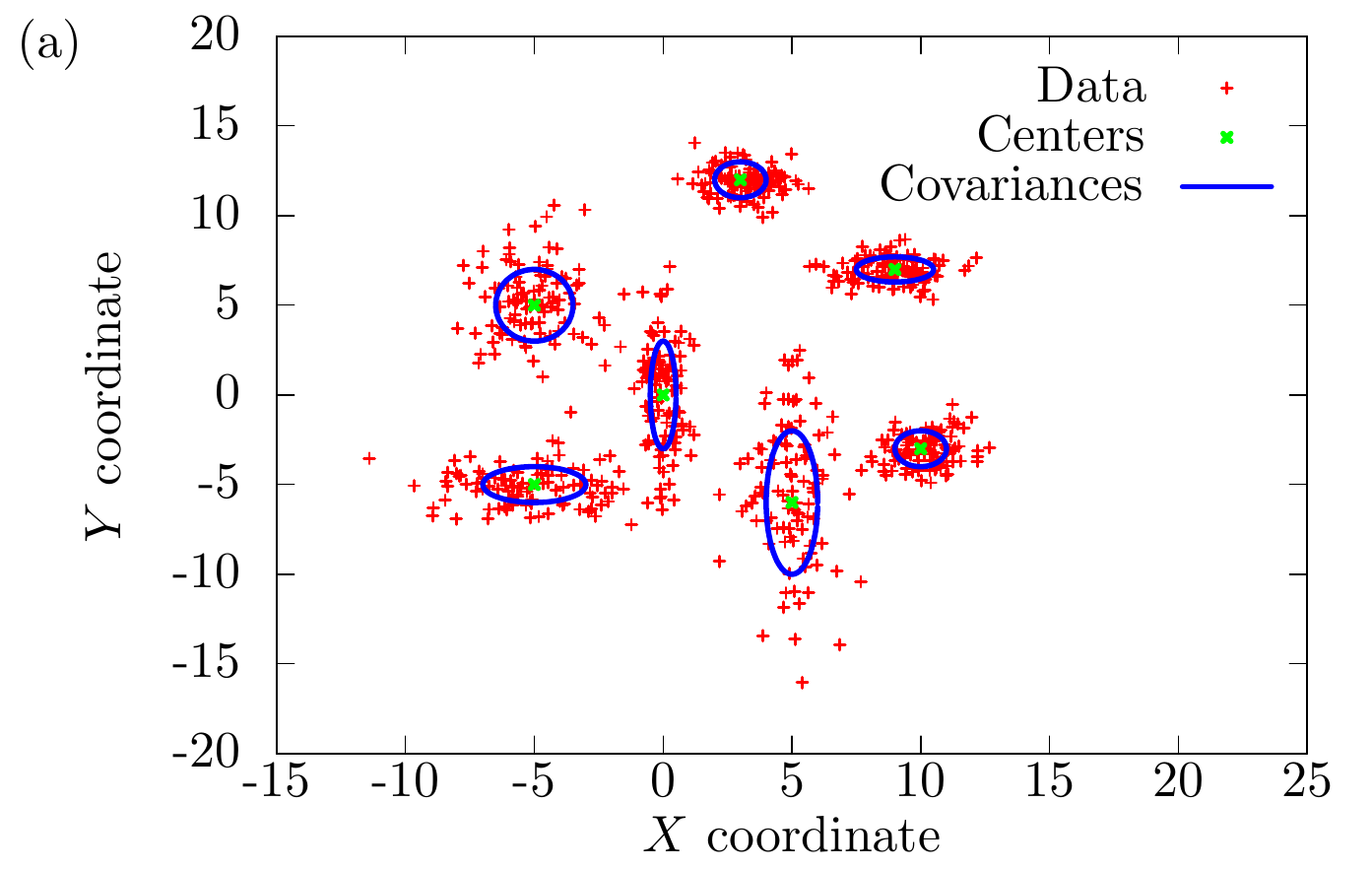}
\includegraphics[scale=0.55]{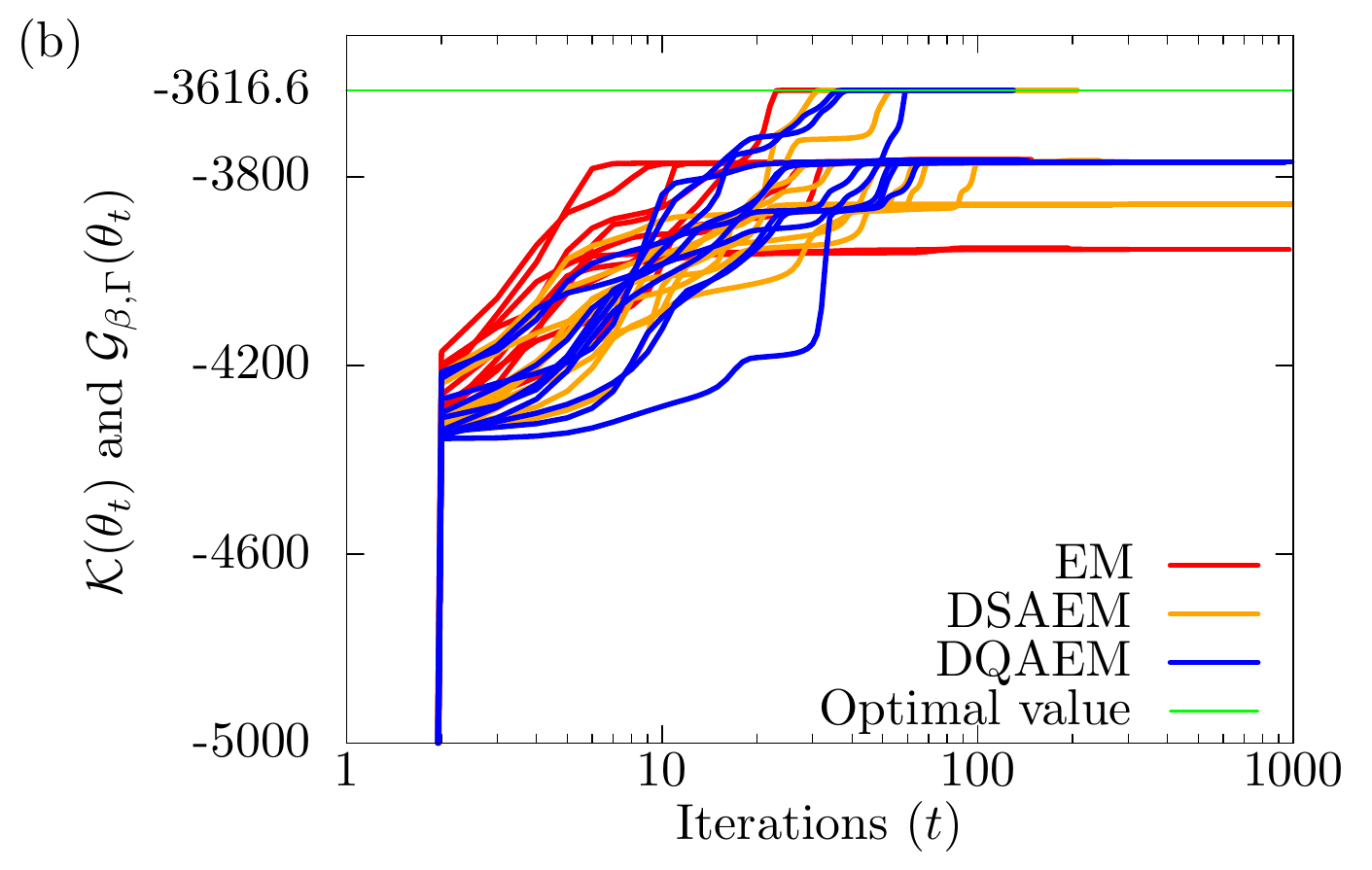}
\caption{(a) Data set generated by seven Gaussian functions. Green x-marks and blue lines depict centers and covariances of Gaussian functions, respectively. (b) Number of iterations (log scale) vs $\mathcal{K}(\theta)$ in Eq.~\eqref{log-likelihood-xx-01} and $\mathcal{G}_{\beta, \Gamma}(\theta)$ in Eq.~\eqref{free-energy-66}.}
\label{numerical-05-01}
\end{figure}

To clarify the performances of three algorithms, we perform DQAEM, EM and DSAEM 1000 times, respectively.
We summarize the success ratios of DQAEM, EM and DSAEM in Table~\ref{comparison-of-three-01-xx}.
From Table.~\ref{comparison-of-three-01-xx}, we conclude that DQAEM is superior to both EM and DSAEM.
\begin{table}
\caption{Success ratios of DQAEM, EM, and DSAEM.}
\label{comparison-of-three-01-xx}
\begin{tabular*}{\textwidth}{@{}l*{15}{@{\extracolsep{0pt plus12pt}}l}}
\br
DQAEM       & EM &          DSAEM   \\ \hline
40.5 \%    &  20.3 \% &    34.3 \% \\
\br
\end{tabular*}
\end{table}
For intuitive understanding of the reason why DQAEM outperforms EM and DSAEM, we add some more numerical simulations in~\ref{sec-numerical-03}.

\section{Conclusion} \label{conc}

In this paper, we have proposed a new algorithm, which we call DQAEM, to relax the problem of local optima of EM and DSAEM by introducing the mechanism of quantum fluctuations.
After formulating it, we have discussed its mechanism from the viewpoint of the path integral formulation.
Furthermore, as applications, we have adopted GMMs.
Then we have elucidated that DQAEM outperforms EM and DSAEM through numerical simulations.
Before closing this paper, we note that the form of the non-commutative term $H^\mathrm{nc}$ has arbitrariness.
The optimal choice of the term is an open problem similarly to original quantum annealing.

\section*{Acknowledgements} \label{ack-00}

HM thanks to Masayuki Ohzeki and Shu Tanaka for fruitful discussions.
This research is partially supported by the Platform for Dynamic Approaches to Living Systems funded by MEXT and AMED, and Grant-in-Aid for Scientific Research (B) (16H04382), Japan Society for the Promotion of Science.

\bibliographystyle{iopart-num}
\bibliography{paper-dqaem-spin-99-01-bib}

\appendix

\section{Convergence theorem} \label{app-dqaem-gmm-02}

We give a theorem that the negative free energy function has monotonicity for iterations.
\begin{theorem} \label{theorem01}
Let $\theta_{t+1} = \argmax_\theta \, \mathcal{U}_{\beta, \Gamma}(\theta;\theta_{t})$.
Then $\mathcal{G}_{\beta, \Gamma}(\theta_{t+1}) \ge \mathcal{G}_{\beta, \Gamma}(\theta_{t})$ holds.
\end{theorem}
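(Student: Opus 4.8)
The plan is to adapt the classical EM monotonicity argument to the operator setting, using the decomposition
\[
\beta\,\mathcal{G}_{\beta,\Gamma}(\theta) = \mathcal{F}_{\beta,\Gamma}(\theta) + \mathrm{KL}\!\left(\hat{\rho}_i \,\middle\|\, \frac{f_{\beta,\Gamma}(y_i,\hat{\sigma}_i;\theta)}{\mathrm{Tr}_{\sigma_i}\!\left[f_{\beta,\Gamma}(y_i,\hat{\sigma}_i;\theta)\right]}\right),
\]
which holds for any choice of density matrices $\hat{\rho}_i$. I would instantiate it with $\hat{\rho}_i=\hat{\rho}_i^{(t)}$, the E-step solution Eq.~\eqref{posterior02} evaluated at the current estimate $\theta_t$, namely $\hat{\rho}_i^{(t)} = f_{\beta,\Gamma}(y_i,\hat{\sigma}_i;\theta_t)/\mathrm{Tr}_{\sigma_i}[f_{\beta,\Gamma}(y_i,\hat{\sigma}_i;\theta_t)]$.

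First I would establish two ingredients. One is nonnegativity of the Umegaki relative entropy, $\mathrm{KL}(\hat{\rho}_1\|\hat{\rho}_2)\ge 0$ with equality iff $\hat{\rho}_1=\hat{\rho}_2$ (Klein's inequality); this also justifies that Eq.~\eqref{posterior02} is the true minimiser in the E step. The other is the identity obtained by substituting Eq.~\eqref{posterior02} into $\mathcal{F}_{\beta,\Gamma}$ of Eq.~\eqref{QAEM-Mstep31} — the manipulation already carried out in the text to introduce $\mathcal{U}_{\beta,\Gamma}$ in Eq.~\eqref{QAEM-Mstep01} — which, with $\hat{\rho}_i$ held at $\hat{\rho}_i^{(t)}$, reads $\mathcal{F}_{\beta,\Gamma}(\theta) = \mathcal{U}_{\beta,\Gamma}(\theta;\theta_t) + C_t$, where $C_t = -\sum_i\mathrm{Tr}_{\sigma_i}[\hat{\rho}_i^{(t)}\ln\hat{\rho}_i^{(t)}]$ does not depend on $\theta$.

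The argument then chains together: at $\theta=\theta_t$ the KL term vanishes, giving $\beta\,\mathcal{G}_{\beta,\Gamma}(\theta_t)=\mathcal{U}_{\beta,\Gamma}(\theta_t;\theta_t)+C_t$; for arbitrary $\theta$ the KL term is nonnegative, giving $\beta\,\mathcal{G}_{\beta,\Gamma}(\theta)\ge\mathcal{U}_{\beta,\Gamma}(\theta;\theta_t)+C_t$. Setting $\theta=\theta_{t+1}$ and invoking the M step, $\mathcal{U}_{\beta,\Gamma}(\theta_{t+1};\theta_t)\ge\mathcal{U}_{\beta,\Gamma}(\theta_t;\theta_t)$ since $\theta_{t+1}$ maximises $\mathcal{U}_{\beta,\Gamma}(\cdot;\theta_t)$, one obtains
\[
\beta\,\mathcal{G}_{\beta,\Gamma}(\theta_{t+1}) \ge \mathcal{U}_{\beta,\Gamma}(\theta_{t+1};\theta_t)+C_t \ge \mathcal{U}_{\beta,\Gamma}(\theta_t;\theta_t)+C_t = \beta\,\mathcal{G}_{\beta,\Gamma}(\theta_t),
\]
and dividing by $\beta>0$ yields the claim.

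The step I expect to need the most care is the noncommutativity: $\hat{\rho}_i^{(t)}$, $\ln\hat{\rho}_i^{(t)}$, and $\ln f_{\beta,\Gamma}(y_i,\hat{\sigma}_i;\theta)$ do not in general commute, so each identity and inequality must be handled as a statement about traces of operator products, and the scalar Gibbs inequality must be upgraded to Klein's inequality for the quantum relative entropy. I would also remark that the Suzuki--Trotter or Pad\'e approximations mentioned for numerically evaluating $f_{\beta,\Gamma}$ are irrelevant to the statement, which concerns the exact operator $f_{\beta,\Gamma}$ and never inspects how it is computed.
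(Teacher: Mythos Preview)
Your proposal is correct and is essentially the same argument as the paper's: both split the increment $\beta\,\Delta\mathcal{G}_{\beta,\Gamma}$ into an $\mathcal{U}$--part that is nonnegative by the M step and a part that is a quantum (Umegaki) relative entropy, nonnegative by Klein's inequality. The only cosmetic difference is that you reuse the $\mathcal{F}/\mathrm{KL}$ decomposition already stated in the main text, whereas the paper rewrites the same identity by introducing an auxiliary $\mathcal{S}_{\beta,\Gamma}(\theta;\theta')=\sum_i\mathrm{Tr}_{\sigma_i}\bigl[\hat{\rho}_i^{(t)}\ln\hat{\rho}_i(\theta)\bigr]$ and observes that $-(\mathcal{S}_{\beta,\Gamma}(\theta_{t+1};\theta_t)-\mathcal{S}_{\beta,\Gamma}(\theta_t;\theta_t))$ is exactly the KL term you drop.
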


\begin{proof}
First, the difference of the free energy functions in each iteration can be written as
\begin{align}
\ifthenelse{\versionmath = 1}
{
\Delta \mathcal{G}_{\beta, \Gamma} &= \mathcal{G}_{\beta,\Gamma}(\theta_{t+1}) - \mathcal{G}_{\beta,\Gamma} (\theta_{t}) \\
								   &= \mathcal{U}_{\beta, \Gamma}(\theta_{t+1};\theta_{t}) - \mathcal{U}_{\beta, \Gamma} (\theta_{t};\theta_{t}) - \frac{1}{\beta} (\mathcal{S}_{\beta, \Gamma}(\theta_{t+1};\theta_{t}) - \mathcal{S}_{\beta, \Gamma} (\theta_{t};\theta_{t})), \label{QA-proof-01}
}
{
\Delta \mathcal{G}_{\beta, \Gamma} &= \mathcal{G}_{\beta,\Gamma}(\theta_{t+1}) - \mathcal{G}_{\beta,\Gamma} (\theta_{t}) \\
								   &= \frac{1}{\beta}(\mathcal{U}_{\beta, \Gamma}(\theta_{t+1};\theta_{t}) - \mathcal{U}_{\beta, \Gamma} (\theta_{t};\theta_{t})) - \frac{1}{\beta} (\mathcal{S}_{\beta, \Gamma}(\theta_{t+1};\theta_{t}) - \mathcal{S}_{\beta, \Gamma} (\theta_{t};\theta_{t})), \label{QA-proof-01}
}
\end{align}
where
\begin{align}
\mathcal{S}_{\beta,\Gamma} (\theta; \theta') &= \sum_{i=1}^N \mathrm{Tr}_{\sigma_{i}} \bigg[ \frac{f_{\beta, \Gamma} (y_{i}, \hat{\sigma}_{i}; \theta')}{\mathrm{Tr}_{\sigma_i} \left[ f_{\beta, \Gamma} (y_{i}, \hat{\sigma}_{i}; \theta') \right] } \ln \frac{f_{\beta, \Gamma} (y_{i}, \hat{\sigma}_{i}; \theta)}{\mathrm{Tr}_{\sigma_i} \left[ f_{\beta, \Gamma} (y_{i}, \hat{\sigma}_{i}; \theta) \right] } \bigg].
\end{align}
The first two terms in the right hand side of Eq.~\eqref{QA-proof-01} is positive due to Eq.~\eqref{QAEM-Mstep03}.
Furthermore, we can show that the rest of the right hand side of Eq.~\eqref{QA-proof-01} are positive by the following calculations:
\begin{align}
- (& \mathcal{S}_{\beta, \Gamma}(\theta_{t+1};\theta_{t}) - \mathcal{S}_{\beta, \Gamma} (\theta_{t};\theta_{t})) \nonumber \\
   &= \mathrm{KL} \left( \frac{f_{\beta, \Gamma} (y_{i}, \hat{\sigma}_{i}; \theta_{t+1})}{\mathrm{Tr}_{\sigma_i} \left[ f_{\beta, \Gamma} (y_{i}, \hat{\sigma}_{i}; \theta_{t+1}) \right] } \middle\| \frac{f_{\beta, \Gamma} (y_{i}, \hat{\sigma}_{i}; \theta_{t})}{\mathrm{Tr}_{\sigma_i} \left[ f_{\beta, \Gamma} (y_{i}, \hat{\sigma}_{i}; \theta_{t}) \right] } \right) \nonumber \\
   &\ge 0.
\end{align}
\end{proof}

This theorem insists that DQAEM converges to at least the global optimum or a local optimum.
The global convergence of EM is discussed by Dempster \textit{et al.}~\cite{Dempster01} and Wu~\cite{Wu01}, and their discussion is available to DQAEM.

\section{Numerical results} \label{sec-numerical-03}

We discuss the performances of DQAEM, EM, and DSAEM by showing how the landscape of the negative free energy function behaves when $\beta$ and $\Gamma$ are changed.
For simplicity, we assume that a GMM is composed of two one-dimensional Gaussian functions and consider the case where only means of two Gaussian functions are estimated.
Accordingly, the joint probability density function is given by
\begin{align}
f(y_i, \sigma_i; \theta = \{\mu^1, \mu^2\}) &= \prod_{k=1}^2 \pi^{k} g(y_i; \mu^k, \Sigma^k) \delta_{\sigma_i, k},
\end{align}
where $g(y_i; \mu, \Sigma)$ is a Gaussian function with mean $\mu$ and covariance $\Sigma$.
Under this setup, we estimate $\theta = \{\mu^1, \mu^2\}$: we assume that $\theta = \{\mu^1, \mu^2\}$ are unknown and $\{\pi^1, \pi^2, \Sigma^1, \Sigma^2\}$ are given.

First, let us describe the landscapes of the negative free energy functions with different $\beta$ and $\Gamma$.
We plot the negative free energy functions of DQAEM at $\Gamma = 0.0$, $5.0$, $10.0$, and $50.0$ with $\beta = 1$ in Fig.~\ref{numerical-01-02}(a), (b), (c), and (d), respectively.
Specifically, Fig.~\ref{numerical-01-02}(a) describes the log-likelihood function, since DQAEM reduces to EM at $\beta = 1$ and $\Gamma = 0$.
In our example used here, the global optimal value for $\{\mu_1, \mu_2\}$ is $\{-2.0, 4.0\}$ (the left top in Fig.~\ref{numerical-01-02}(a)) and the point $\{4.0, -2.0\}$ (the right top in Fig.~\ref{numerical-01-02}(a)) is the local optimum.
If we set the initial value at a point close to the local optimum, EM fails to find the global optimum.
In contrast, Fig.~\ref{numerical-01-02} shows that the negative free energy function changes from a multimodal form to a unimodal form when $\Gamma$ increases.
Thus, even if we set the initial value close to the local optimum, DQAEM may find the global optimum through the annealing process.
On the other hand, in Fig.~\ref{numerical-01-01}, we plot the negative free energy functions of DSAEM at inverse temperature $\beta = 1.0$, $0.3$, $0.1$, and $0.01$.
DSAEM has the similar effect in the negative free energy function.

\begin{figure}[t]
\centering
\includegraphics[scale=0.3]{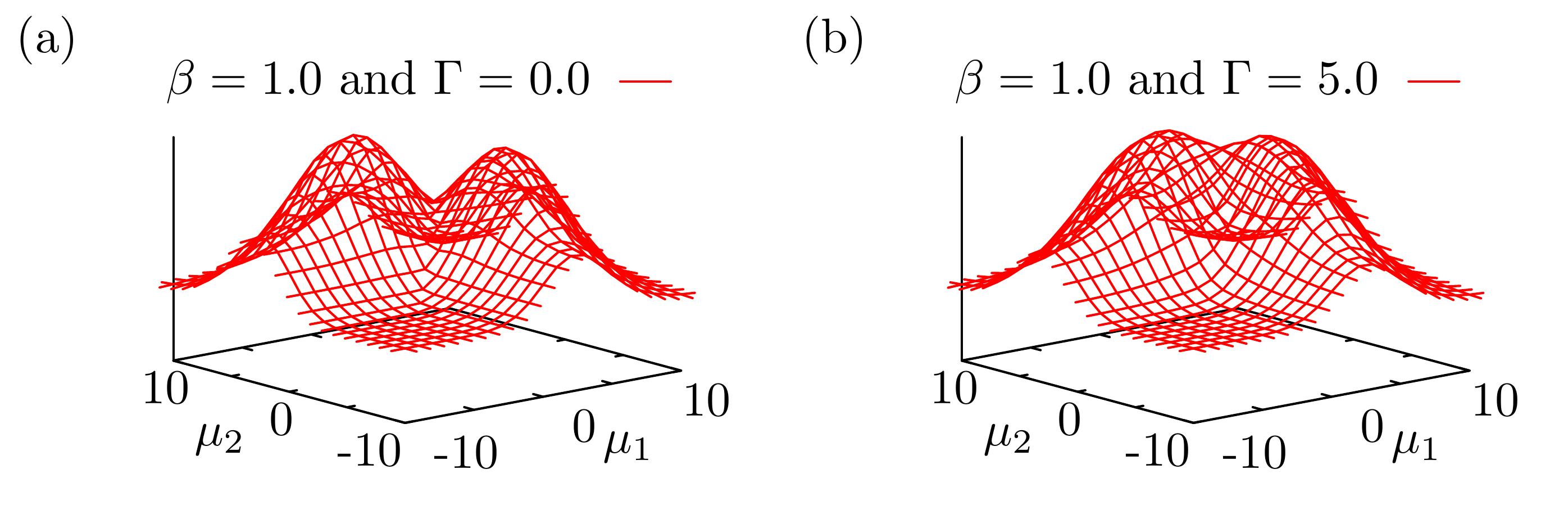}
\includegraphics[scale=0.3]{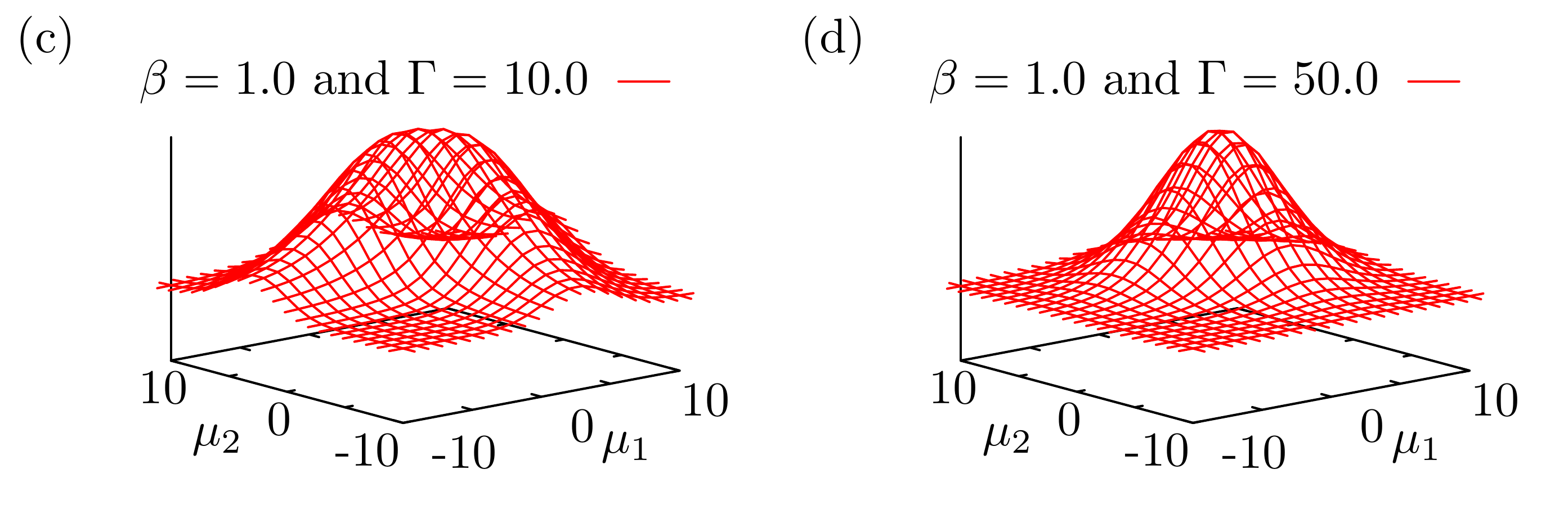}
\caption{Negative free energy functions for (a) $\Gamma = 0.0$, (b) $\Gamma = 5.0$, (c) $\Gamma = 10.0$, and (d) $\Gamma = 50.0$. $\beta$ is set to $1.0$.
 }
\label{numerical-01-02}
\end{figure}

However, as we have seen in Sec.~\ref{sec-numerical-02}, they differ qualitatively.
To understand this fact intuitively, we show the trajectories of the estimated parameters of EM, DSAEM, and DQAEM in Fig.~\ref{numerical-02-01}.
In this numerical simulation, the initial parameter $\theta_{0}$ is set near the local optimum in the log-likelihood function. 
The red line depicts the trajectory of the estimated parameter of EM and goes to the local optimum orthogonally crossing the contour plots.
As shown by the orange line, DSAEM eventually fails to find the global optimum as same as the case of EM.
On the other hand, the estimated parameter $\theta_{t}$ of DQAEM shown by the blue line surmounts the potential barrier and reaches the global optimum.
Accordingly, we consider that DQAEM outperforms EM and DSAEM.

\begin{figure}[t]
\centering
\includegraphics[scale=0.3]{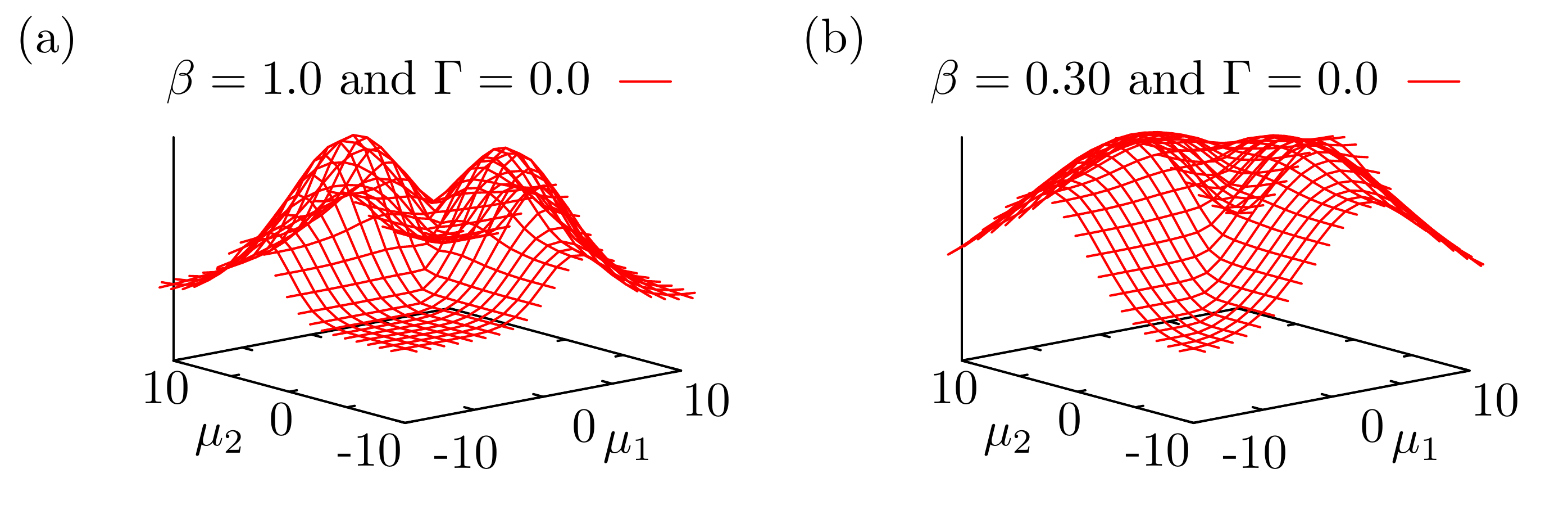}
\includegraphics[scale=0.3]{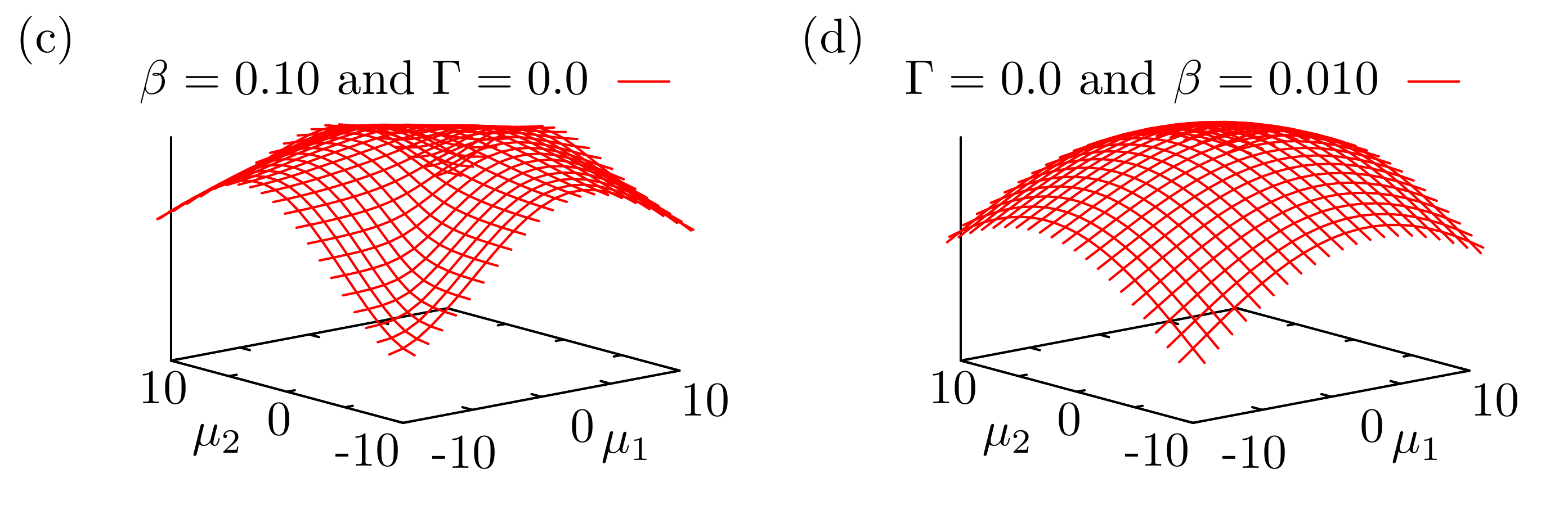}
\caption{Negative free energy functions for (a) $\beta = 1.0$, (b) $\beta = 0.3$, (c) $\beta = 0.1$, and (d) $\beta = 0.01$. $\Gamma$ is set to $0$.
}
\label{numerical-01-01}
\end{figure}

\begin{figure}[t]
\centering
\includegraphics[scale=0.3]{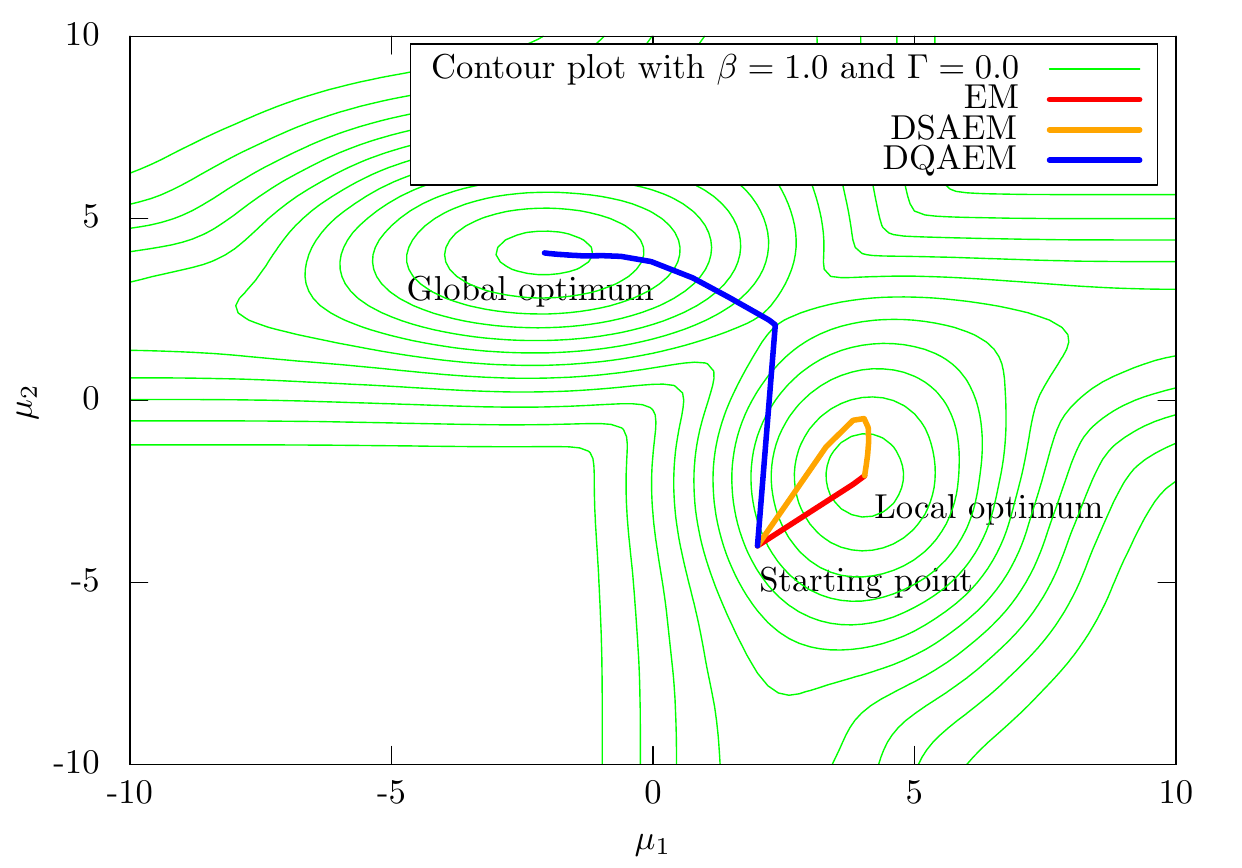}
\caption{Trajectories of the estimated parameters of EM (the red line), DSAEM (the orange line), and DQAEM (the blue line). The initial input is $\{\mu_0^1, \mu_0^2\} = \{2.0, -4.0\}$ and the annealing schedule for DQAEM and DSAEM are fixed. The green lines represent the contour plot of the log-likelihood function.}
\label{numerical-02-01}
\end{figure}

\end{document}